\documentclass[a4paper]{article}

\usepackage[margin=3cm]{geometry}

\usepackage[round]{natbib}
\usepackage[section]{placeins}
\usepackage{enumitem}

\usepackage{amsmath}
\usepackage{amssymb}
\usepackage{amsthm}
\usepackage{stmaryrd}
\usepackage{algpseudocode}
\newtheorem{thm}{Theorem}
\newtheorem{lem}{Lemma}

\newtheorem*{cor*}{\hypertarget{cor}{Corollary}}
\theoremstyle{definition}
\newtheorem{defn}{Definition}
\newtheorem*{rem*}{Remark}

\newcommand{\Sig}{\Sigma^\textrm{P}_2}

\newcommand{\xp}{x^\prime}
\newcommand{\yp}{y^\prime}
\newcommand{\xs}{x^\star}

\newcommand{\X}{\mathcal{X}}
\newcommand{\Y}{\mathcal{Y}}

\newcommand{\bitn}{\{0,1\}^n}
\newcommand{\bitm}{\{0,1\}^m}
\newcommand{\bitnm}{\bitn\times\bitm}

\newcommand{\vp}{v^\prime}

\newcommand{\TAUT}{\textsc{tautology}}

\usepackage{tikz}
\usetikzlibrary{positioning,arrows.meta,calc,backgrounds}
\tikzstyle{strut}=[execute at begin node=\strut]
\tikzstyle{arr}=[-{Straight Barb[length=.7mm]}]
\tikzstyle{rra}=[{Straight Barb[length=.7mm]}-]

\usepackage[colorlinks,allcolors=blue]{hyperref}
\usepackage[nameinlink,capitalize]{cleveref}  
\crefname{equation}{}{}
\crefformat{section}{#2\S#1#3}
\crefformat{subsection}{#2\S#1#3}
\crefmultiformat{section}{#2\S#1#3}{ and~#2\S#1#3}{, #2\S#1#3}{, and~#2\S#1#3}
\crefmultiformat{subsection}{#2\S#1#3}{ and~#2\S#1#3}{, #2\S#1#3}{, and~#2\S#1#3}
\crefrangeformat{section}{#3\S#1#4--#5#2#6}
\crefrangeformat{subsection}{#3\S#1#4--#5#2#6}

\crefformat{thm}{#2Theorem~#1#3}
\crefformat{lem}{#2Lemma~1#3}
\crefformat{defn}{#2Definition~#1#3}
\crefmultiformat{thm}{#2Theorems #1#3}{ and~#2#1#3}{, #2#1#3}{, and~#2#1#3}
\crefmultiformat{lem}{#2Lemmas #1#3}{ and~#2#1#3}{, #2#1#3}{, and~#2#1#3}
\crefmultiformat{defn}{#2Definitions #1#3}{ and~#2#1#3}{, #2#1#3}{, and~#2#1#3}
\newcommand{\citex}[3][]{\citep[#2:][#1]{#3}}  
\newcommand{\citeg}[2][]{\citex[#1]{for example}{#2}}  

\newcommand{\sortbib}[1]{}

\usepackage{titlesec}
\titleformat*{\subsection}{\normalsize\bfseries}

\title{On the Complexity of the Compatibility Problem for Succinctly Encoded Conditional Distributions}
\author{Guy Emerson}
\date{}

\begin{document}

\maketitle

\abstract{%
The motivation for this paper is the investigation of
the trade-offs implicit in probabilistic models used in machine learning.
Models are often used to make predictions
in the form of conditional probabilities.
However, a pair of conditional distributions $p(x|y)$ and $p(y|x)$
may not be compatible with any joint distribution $p(x,y)$.
Given two such conditionals,
determining if there exists a compatible joint
is known as the compatibility problem.
For discrete random variables,
when the conditionals are encoded as probability tables,
the compatibility problem has a known solution,
which is computationally tractable.
In this paper, we formalise and study a succinct version of the problem,
encoding conditional distributions as arithmetic circuits.
This is applicable to practical applications of probabilistic modelling
in high-dimensional settings, including neural network models.
We show that,
for succinct circuit representations of conditionals,
the compatibility problem is intractable.
In the case that all probabilities are non-zero,
the problem is co-NP-complete.
In the case that probabilities can be zero,
we give examples to demonstrate that
several notions of compatibility can be distinguished,
and we prove that multiple versions of the problem are PSPACE-complete.
Furthermore, we show that,
assuming the polynomial hierarchy does not collapse,
there exist compatible succinct conditionals
whose joint cannot be expressed succinctly.
Implications of these results
for probabilistic modelling and machine learning are discussed.}

\section{Introduction}

A joint distribution $p(x,y)$ over $\X\times\Y$
implicitly defines conditional distributions
$p(x|y)=\frac{p(x,y)}{\sum_{\xp}p(\xp,y)}$
and $p(y|x)=\frac{p(x,y)}{\sum_{\yp}p(x,\yp)}$.
Conversely, determining whether conditional distributions
$p(x|y)$ and $p(y|x)$
are compatible with some joint distribution $p(x,y)$
is known as the compatibility problem
\citep{arnold1989compatible}.
It can be solved as a constraint satisfaction problem
using techniques from linear algebra
\citep{arnold2004partial}.

However, if either of the spaces $\X$ and $\Y$ are large,
iterating over all points is computationally expensive.
A number of authors have sought more efficient algorithms
\citeg{tian2008unified,ip2009canonical,chen2010compatible,kuo2011compatible,yao2014compatible,kuo2017highdim,miranda2020rip}.
However, the question remains how far the cost can be reduced.

In fact, in high-dimensional settings, it would be prohibitive
to even store a distribution as an enumeration of all values.
In practice, a probabilistic model might
store only a small number of parameters,
which can be used to calculate any particular probability.
In other words, the probabilistic model is a \textit{succinct} representation,
in the sense of \citet{galperin1983succinct},
but applied to distributions rather than graphs.
However, to our best knowledge,
the compatibility problem has not been studied in the succinct setting.
In this paper, we formalise such a probability distribution
as a succinct arithmetic circuit \citep{darwiche2003poly}
with a further constraint that all values can be expressed succinctly
(that is, with a bounded number of digits).

This setting includes Bayesian networks,
where inference is known to be \#P-complete
\citep{roth1996hard}.
At first sight, it seems possible
that the compatibility problem might be simpler than Bayesian inference,
since the conditionals are provided as input,
and do not need to be computed from scratch.

The main result of this paper is that
the succinct compatibility problem is intractable.
If all probabilities are non-zero,
the problem is co-NP-complete,
because incompatibilities can be identified locally.
If probabilities can be zero,
the non-zero conditional probabilities define
a directed graph over $\X\times\Y$.
This graph may have multiple strongly connected components,
and we could require compatibility in all components (strong compatibility),
or in at least one (weak compatibility).
We show the strong and weak compatibility problems
are both PSPACE-complete.

If a joint exists,
we may also wish to know whether it can be expressed succinctly.
We define two notions of tractable compatibility,
a weaker one requiring that each probability value
requires only polynomially many digits to express,
and a stronger one requiring that the joint
can be represented as a succinct circuit.
We show that determining compatibility with a tractably valued joint
is still PSPACE-complete,
while determining compatibility with a succinctly expressible joint
is only $\Sig$-complete.
Assuming that $\Sig\neq\textrm{PSPACE}$,
these results mean that there are families of compatible succinct conditionals
whose joints have succinctly expressible values
but which nonetheless cannot be represented as succinct circuits.

\section{Succinct Circuit Representations of Distributions}

Consider random variables over boolean vectors $\bitn$.
We define a circuit representation of a distribution
as an arithmetic circuit \citep{darwiche2003poly},
which takes the values of the random variable(s) as input
and gives the probability as output.
More precisely, an arithmetic circuit
is a directed acyclic graph,
where each node is labelled as
an input (with no incoming edges),
an integer constant (with no incoming edges),
or an operation ($+$ or $\times$, with exactly two incoming edges);
some nodes can be additionally labelled as output.

Rational numbers can be represented as pairs of integers,
and so an arithmetic circuit with two output nodes
can be used to represent a probability distribution.
To represent a conditional distribution $p(x|y)$,
both $x$ and $y$ are considered as input,
using $n$ input nodes to represent a random variable over $\bitn$.
However, for such a circuit to represent a tractable computation,
we must also ensure that each node can be tractably computed.
As a counter-example, consider a circuit
structured as a long chain,
starting from the constant $2$,
with each subsequent step in the chain
multiplying the previous number with itself;
after $n$ multiplications, this circuit computes
the double exponential $2^{2^n}$,
which requires exponentially many bits of memory.

\begin{defn}
A \textbf{tractable circuit representation} of a distribution
is an arithmetic circuit with exactly two output nodes,
such that, given any values in $\{0,1\}$ for the input nodes,
the number of bits in the binary representation of each node
is bounded by the size of the circuit.%
\label{defn:tract-circ}
\end{defn}

Such arithmetic circuits are flexible enough
to simulate arbitrary boolean circuits.
By discretising real numbers to any given degree of accuracy,
we can also simulate numerical algorithms.
In particular, by introducing \textit{max} as an additional operation
that finds the maximum of two nodes,
tractable arithmetic circuits can simulate
neural networks with piecewise polynomial activation functions
(for example, ReLU).

\begin{defn}
A family of distributions is \textbf{succinctly representable}
if the distributions can be represented by tractable circuits
of polynomially bounded size with respect to the number of input nodes.%
\label{defn:succ-fam}
\end{defn}

\cref{defn:tract-circ} uses a linear bound on bit lengths,
but any polynomial bound could be used instead,
in order to define the same families of distributions
in \cref{defn:succ-fam}:
given a family of circuits whose binary node values
have lengths bounded by a polynomial in circuit size,
we can construct a ``padded'' family of circuits satisfying
\cref{defn:tract-circ},
which are only polynomially larger,
and whose outputs always match the original family.

\section{The Succinct Compatibility Problem}
\label{sec:nonzero}

We first restrict ourselves
to distributions that are non-zero everywhere.
This avoids the problem that
conditional probabilities are undefined
when conditioning on an event with probability zero:
$p(y|x)$ is undefined if $p(x)=0$,
and $p(x|y)$ is undefined if $p(y)=0$.
The case where probabilities can be zero
will be considered in~\cref{sec:zero}.
In the non-zero case, we have the following result.\footnote{%
	This is naturally defined as a promise problem,
	since a circuit representation has no guarantee
	that the outputs are normalised to sum to~$1$.
	However, the compatibility problem can be extended to all inputs
	if we follow the convention that
	a circuit implicitly defines a distribution
	after its outputs are normalised.
	Ratios of probabilities are invariant to normalisation,
	and so this does not change any of the complexity results in this paper.
}

\begin{thm}
Given tractable circuit representations of the conditionals $p(x|y)$ and $p(y|x)$,
with $p(x|y),p(y|x)>0$ for all $(x,y)$,
determining whether they are
compatible with some joint $p(x,y)$ is co-NP-complete.
\label{thm:nonzero}
\end{thm}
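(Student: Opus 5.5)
We first characterise compatibility in the everywhere-positive case by a local condition, and then use it for both membership and hardness. Write $a(x,y)=p(x|y)$ and $b(x,y)=p(y|x)$. If a positive joint exists, then $p(x,y)=a(x,y)p(y)=b(x,y)p(x)$, so the ratio $a(x,y)/b(x,y)=p(x)/p(y)$ factorises as $u(x)v(y)$. Conversely, if $a/b$ factorises, then normalising $b(x,y)u(x)$, or equivalently $a(x,y)/v(y)$, gives a compatible joint (the classical argument of \citet{arnold1989compatible}). Factorisation of a positive function on a product set is equivalent to the ``rectangle'' (cross-ratio) condition: for all $x,\xp\in\bitn$ and $y,\yp\in\bitm$,
\[
a(x,y)\,a(\xp,\yp)\,b(\xp,y)\,b(x,\yp)=a(\xp,y)\,a(x,\yp)\,b(x,y)\,b(\xp,\yp).
\]
Indeed, fixing a reference point $(x_0,y_0)$ and setting $u(x)=\frac{a(x,y_0)}{b(x,y_0)}$ and $v(y)=\frac{a(x_0,y)\,b(x_0,y_0)}{b(x_0,y)\,a(x_0,y_0)}$ reproduces $a/b$ whenever the rectangle condition holds with $(\xp,\yp)=(x_0,y_0)$. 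So compatibility is equivalent to this condition holding for all quadruples $(x,\xp,y,\yp)$.

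\textbf{Membership in co-NP.} A certificate of incompatibility is a quadruple $(x,\xp,y,\yp)$ violating the identity. We verify it by evaluating the two circuits at the four relevant points. By \cref{defn:tract-circ}, every node, and hence every numerator and denominator, has bit length bounded by the circuit size. Cross-multiplying numerators and denominators therefore gives an exact integer comparison of polynomial size. Per the footnote, normalisation is irrelevant because only ratios appear in the identity.

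\textbf{co-NP-hardness.} We reduce \TAUT{} to compatibility: a boolean formula $\phi$ on $n$ variables is a tautology iff the constructed pair of conditionals is compatible. Take $\Y=\{0,1\}$ (one bit) and $\X=\bitn$. Arithmetise $\phi$ as a 0/1-valued circuit $f(x)$, using $\neg z=1-z$ and $z\wedge w=zw$; subtraction is available by multiplying by the constant $-1$. Let $p(y|x)$ be uniform, i.e.\ the constant $1/2$. Let $p(x|y=0)$ be uniform on $\bitn$ with value $1/2^n$, and let $p(x|y=1)$ be proportional to $2-f(x)$. Both circuits are positive and polynomial size, and their bit lengths are small. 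For the second conditional we use the unnormalised convention of the footnote, or else give the denominator explicitly as $2^{n+1}-\#\phi$, which we cannot compute; so we rely on the normalisation convention. Compatibility requires $a(x,1)/a(x,0)$ to be independent of $x$, since $b$ is constant, i.e.\ $2-f(x)$ must be constant. This holds iff $f\equiv1$ or $f\equiv0$. To exclude the constant-false case, first replace $\phi$ by $\phi'=\phi\vee z_{n+1}$ on a fresh variable; $\phi'$ is satisfiable, and it is a tautology iff $\phi$ is. Then compatibility holds iff $\phi$ is a tautology.

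The main obstacle is this hardness reduction under the promise that the conditional circuits are normalised. If the promise version is required, we can avoid unknown normalisers by making the differing conditional sum to a known constant regardless of $\phi$. One option is to pair each $x$ with a complementary value that absorbs the difference, setting $p(x,\text{extra bit}\mid y=1)\propto 1+f(x)\cdot(-1)^{\text{bit}}/2$. Each pair then sums to $2$, so the normaliser is $2^{n+1}$, while the ratio is still non-constant exactly when $f$ is not identically $1$. For this variant we replace the test ``$f\equiv1$'' by ``$1-f\equiv0$'' and use the weights $1\pm(1-f(x))/2$, which are uniform iff $\phi$ is a tautology.
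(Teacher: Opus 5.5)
Your argument is correct. The co-NP membership half is essentially the paper's. You use the same quadruple certificate, and your factorisation $a/b=u(x)v(y)$ anchored at $(x_0,y_0)$ is the paper's explicit joint $p(x,y)\propto p(x|y)p(y|x_0)/p(x_0|y)$ in another form. You should add one line, as the paper does, noting that any compatible joint is automatically positive, so the marginals can be divided out.

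The hardness half uses a genuinely different gadget. The paper takes $\Y=\X=\bitn$, keeps $p(x|y)$ uniform, and perturbs $p(y|x)$ only on rows with $\varphi(x)=\bot$, shifting weight to the diagonal $y=x$. Each perturbed row depends on a single evaluation of $\varphi$, so it is normalised by construction: $\frac{2}{2^n+1}+\frac{2^n-1}{2^n+1}=1$. Because the perturbation is non-uniform within the row, one falsifying $\xs$ immediately gives the certificate $(x_0,\xs,x_0,\xs)$, with no need to exclude unsatisfiable formulas.

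You instead perturb a conditional that ranges over the formula's input, namely $p(x|y=1)$. Its normaliser $2^{n+1}-\#\phi$ is \#P-hard to compute, and a uniform shift of a whole slice is invisible to the cross-ratio. That is why your first version only works under the footnote's normalisation convention and also needs the $\phi\vee z_{n+1}$ patch. Your paired-bit variant with weights $1\pm(1-f(x))/2$ fixes both problems at once: each pair sums to $2$, and a falsifying $x$ creates a $3:1$ ratio inside one slice. So it is a valid reduction for the promise version. One slip: the intermediate weights $1+f(x)(-1)^{\text{bit}}/2$ you first wrote are non-constant iff $f\not\equiv 0$, not iff $f\not\equiv 1$, though your corrected weights are right.

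Your route buys a slightly sharper statement: co-NP-hardness already when $\Y$ is a single bit and $p(y|x)$ is constant. The paper's route buys a reduction with no normalisation bookkeeping, because the perturbation sits in the conditional indexed by the formula's input rather than one ranging over it.
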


The proof is given in~\cref{sec:nonzero-lower,sec:nonzero-upper}
(for the upper and lower complexity bounds, respectively).
The basic idea is that, in order to check compatibility,
we need to check a constraint
on each set of four points $(x_1,x_2,y_1,y_2)$,
following \citet{arnold1994uniform}.
Intuitively, this is a co-NP-complete problem
because we have exponentially many constraints to check,
and any single violation means the conditionals are incompatible.

\subsection{Proof of upper bound: Non-zero Succinct Compatibility is in co-NP}
\label{sec:nonzero-upper}

We define a verifier, with certificates of incompatibility
of the form $(x_1,x_2,y_1,y_2)$.
A certificate can be verified by checking:
\begin{equation}
p(x_1|y_1)p(y_1|x_2)p(x_2|y_2)p(y_2|x_1)
\neq
p(y_1|x_1)p(x_1|y_2)p(y_2|x_2)p(x_2|y_1)
\label{eqn:verify}
\end{equation}

This inequality can be checked in linear time
with respect to the maximum circuit size
of $p(x|y)$ and $p(y|x)$.
We must now show that this verifier is correct.

First, suppose compatibility.
Then a joint distribution and marginal distributions exist,
such that $p(x|y)p(y)=p(x,y)=p(y|x)p(x)$ for any $(x,y)$.
Therefore, for any $(x_1,x_2,y_1,y_2)$:
\begin{equation}
\begin{split}
p(x_1|y_1)p(y_1|x_2)p(x_2|y_2)p(y_2|x_1)p(x_1)p(x_2)p(y_1)p(y_2)\\
= p(y_1|x_1)p(x_1|y_2)p(y_2|x_2)p(x_2|y_1)p(x_1)p(x_2)p(y_1)p(y_2)
\end{split}
\label{eqn:joint}
\end{equation}

If any marginal probability $p(a)$ is zero,
conditional probabilities $p(a|b)$ would also be zero.
The marginal terms are therefore non-zero and cancel,
so the verifier rejects any certificate.

Conversely, suppose every certificate is rejected.
Then fix some $(x_0,y_0)$.
A joint distribution can be constructed as the following,
which is well-defined since all conditional probabilities are non-zero:
\begin{equation}
p(x,y)=\alpha\frac{p(x|y)p(y|x_0)}{p(x_0|y)p(y_0|x_0)}
\end{equation}

The value of the normalisation constant
$\alpha=p(x_0,y_0)$ is defined
by summing over all~$(x,y)$
to ensure $\sum p(x,y)=1$.
We must now show this joint distribution
is compatible with both conditionals.
In the above expression, the only term that depends on~$x$ is $p(x|y)$,
and so
$\frac{p(x,y)}{\sum_{\xp} p(\xp,y)}
=\frac{p(x|y)}{\sum_{\xp} p(\xp|y)}
=p(x|y)$ as required.

Furthermore, for every $(x,y)$,
the certificate $(x,x_0,y,y_0)$ is rejected,
and so:
\begin{equation}
p(x,y)=\alpha\frac{p(y|x)p(x|y_0)}{p(y_0|x)p(x_0|y_0)}
\end{equation}

Here, the only term that depends on $y$ is $p(y|x)$,
and so we also have
$\frac{p(x,y)}{\sum_{\yp} p(x,\yp)}=p(y|x)$.

Therefore, all certificates are rejected
iff $p(x|y)$ and $p(y|x)$ are compatible.

\subsection{Proof of lower bound: Non-zero Succinct Compatibility is co-NP-hard}
\label{sec:nonzero-lower}

We show this by reduction from \TAUT,
the co-NP-complete problem of determining whether a given boolean formula
is true under every assignment of values to its variables.
To define a reduction, the idea is to use $x$ to hold the variables,
and to adjust the distribution of $p(y|x)$
depending on the value of the formula,
so that any false value creates an incompatibility.

Given a boolean formula~$\varphi$ with $n$ variables,
let $\X=\Y=\bitn$,
let $p(x|y)=\frac{1}{2^n}$, and let:
\begin{equation}
p(y|x)=
\begin{cases}
\frac{1}{2^n} & \textrm{if $\varphi(x)=\top$}\\
\frac{2}{2^n+1} & \textrm{if $\varphi(x)=\bot$ and $x=y$}\\
\frac{1}{2^n+1} & \textrm{if $\varphi(x)=\bot$ and $x\neq y$}
\end{cases}
\end{equation}

By construction, both $p(y|x)$ and $p(x|y)$
are representable by circuits linear in the length of~$\varphi$,
and the reduction can be performed in linear time.

If $\varphi$ is a tautology,
then $p(y|x)$ and $p(x|y)$ are compatible,
with uniform joint distribution $p(x,y)=\frac{1}{2^{2n}}$.

Conversely, suppose there is some $\xs$
for which $\varphi(\xs)=\bot$.
Choose any $x_0\neq \xs$.
Then the certificate $(x_0,\xs,x_0,\xs)$
is rejected by~\cref{eqn:verify},
since the $p(x|y)$ terms cancel,
and for the $p(y|x)$ terms, we have
$p(x_0|x_0)\geq p(\xs|x_0)$, and $p(\xs|\xs)>p(x_0|\xs)$.

Therefore,
$\varphi$ is a tautology
iff $p(x|y)$ and $p(y|x)$ are compatible.

\section{Succinct Compatibility with Zeroes}
\label{sec:zero}

If at least one term on each side of~\cref{eqn:verify} is zero,
then we immediately have equality.
\citet{arnold1998nearly} give the following counter-example
of incompatible conditionals
where we always have equality in~\cref{eqn:verify},
showing that further constraints are necessary.
\begin{align}
p(y|x) &=\begin{pmatrix}
	\frac{1}{2} & \frac{1}{2} & 0 \\[.5ex]
	0 & \frac{1}{2} & \frac{1}{2} \\[.5ex]
	\frac{1}{2} & 0 & \frac{1}{2}
\end{pmatrix}
&p(x|y) &=\begin{pmatrix}
	\frac{1}{3} & \frac{2}{3} & 0 \\[.5ex]
	0 & \frac{1}{3} & \frac{2}{3} \\[.5ex]
	\frac{2}{3} & 0 & \frac{1}{3}
\end{pmatrix}
\label{eqn:eg-incomp}
\end{align}

In fact, when we allow zero probabilities,
we have more than one way to define compatibility,
because conditioning on an event with zero probability is undefined.
Following \citet{walley1991statistical}
and \citet{miranda2009coherence},
we can distinguish strong and weak compatibility,
which differ in whether or not the conditionals are constrained
in the cases where the joint assigns zero probability
to the conditioning event:\footnote{%
	These definitions allow the joint
	not to have full support in its margins,
	and allow the conditionals to formally sum to 0
	when the conditioning event has zero probability.
	Restricting the input conditionals to disallow this
	would not affect the complexity results.
}

\begin{defn}
Conditionals $p(x|y)$ and $p(y|x)$ are \textbf{weakly compatible}
if there exists a joint $p(x,y)$
such that
$p(x|y)=\frac{p(x,y)}{\sum_{\xp}p(\xp,y)}$
and $p(y|x)=\frac{p(x,y)}{\sum_{\yp}p(x,\yp)}$
whenever these values are defined.
\end{defn}

\begin{defn}
Conditionals $p(x|y)$ and $p(y|x)$ are \textbf{strongly compatible}
if there exists a joint $p(x,y)$
such that
$p(x|y)=\frac{p(x,y)}{\sum_{\xp}p(\xp,y)}$
and $p(y|x)=\frac{p(x,y)}{\sum_{\yp}p(x,\yp)}$
whenever these values are defined,
and $p(x|y)=p(y|x)=0$ otherwise.
\end{defn}

Because of the special behaviour of zero probabilities,
it is convenient to view the conditionals as defining a graph,
where edges correspond to non-zero probabilities:

\begin{defn}
Given $p(x|y)$ and $p(y|x)$,
their \textbf{non-zero graph}
is defined by
the set of nodes $\{(x,y)\ |\ p(y|x)>0\textrm{ or }p(x|y)>0\}$,
with directed edges
$(x,y)\rightarrow(x,\yp)$ whenever $p(\yp|x)>0$
and $(x,y)\rightarrow(\xp,y)$ whenever $p(\xp|y)>0$.
\end{defn}

If forward and reverse edges exists,
we have a finite probability ratio:

\begin{defn}
The \textbf{probability ratio} of an edge $(x,y)\rightarrow(x,\yp)$
is $\frac{p(\yp|x)}{p(y|x)}$,
and of an edge $(x,y)\rightarrow(\xp,y)$
is $\frac{p(\xp|y)}{p(x|y)}$.
The \textbf{probability ratio} of a path
is the product of its edge ratios.
\end{defn}

For strongly compatible conditionals,
if two paths start and end at the same nodes,
they must have the same ratio,
equal to the ratio of joint probabilities
(as observed by
\citealp{besag1994markov,kuo2011compatible}).
Therefore, all loops (closed paths)
must have ratio~$1$.
In fact, this is also sufficient,
as stated below in~\cref{lem:loop-ratio}.
For example, in~\cref{eqn:eg-incomp},
there is exactly one non-trivial loop (without repeating nodes),
and so it is necessary and sufficient to check the following constraint,
a product of six probability ratios:
\begin{equation}
\frac
{p(y_2|x_1)p(x_2|y_2)p(y_3|x_2)p(x_3|y_3)p(y_1|x_3)p(x_1|y_1)}
{p(x_1|y_2)p(y_2|x_2)p(x_2|y_3)p(y_3|x_3)p(x_3|y_1)p(y_1|x_1)}
= 1
\label{eqn:verify-loop}
\end{equation}

\begin{lem}
Conditionals $p(x|y)$ and $p(y|x)$ are strongly compatible
iff every edge in the non-zero graph has a reverse edge
and every loop has probability ratio~$1$;
they are weakly compatible
iff there is some strongly connected component of the non-zero graph
where every outgoing edge has a reverse edge
and every loop has probability ratio~1.%
\label{lem:loop-ratio}
\end{lem}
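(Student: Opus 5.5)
We prove the strong case first and then obtain the weak case by restricting to a single component.

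\emph{Necessity for strong compatibility.} Let $p(x,y)$ be a joint witnessing strong compatibility. We first claim that every node $(x,y)$ of the non-zero graph has $p(x,y)>0$. Suppose $p(y|x)>0$. Then $p(y|x)$ must be defined, since otherwise it would be forced to be $0$; so $p(x)>0$ and $p(x,y)=p(y|x)p(x)>0$. The case $p(x|y)>0$ is symmetric. Consequently both conditionals are defined and positive at every node. An edge $(x,y)\to(x,\yp)$ means $p(\yp|x)>0$. Since $(x,y)$ is a node, $p(y|x)>0$, so the reverse edge exists. Its ratio is $p(x,\yp)/p(x,y)$, and the same holds for edges that change $x$. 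Along any path the ratios telescope to $p(\text{end})/p(\text{start})$, so every loop has ratio $1$.

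\emph{Sufficiency for strong compatibility.} Because every edge has a reverse edge, the graph is undirected in effect, and its strongly connected components are its connected components. In each component $C$, fix a root $r$ and define $q(v)$ as the ratio of any path from $r$ to $v$. This is well defined because two such paths, combined with the reverse of one of them, form a loop; reversing a path inverts its ratio, so the loop condition forces the two ratios to agree. Give each component a positive weight $w_C$, set $p(v)=w_C\,q(v)$ on $C$ and $p=0$ off the nodes, and normalise.

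We must check compatibility. Fix $x$ with $p(x)>0$. The nodes in row $x$ are exactly the pairs $(x,\yp)$ with $p(\yp|x)>0$ or $p(x|\yp)>0$. If some $p(y|x)>0$, then all the points in the support of $p(\cdot|x)$ are mutually joined by edges, so they lie in one component, and $p(x,\yp)\propto p(\yp|x)$ there. One subtle point needs care: a node $(x,\yp)$ might have $p(x|\yp)>0$ but $p(\yp|x)=0$. Then the edge $(x,\yp)\to(x,\yp)$, or an edge from another point in the row, reaches into the support of $p(\cdot|x)$, but must also be reversible. In fact, if $p(\cdot|x)$ has any positive value, then $(x,\yp)\to(x,y)$ is an edge, and its reverse requires $p(\yp|x)>0$, which is a contradiction. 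If instead $p(\cdot|x)\equiv0$, the edges within the column must still be reversed, and we need the conditional $p(\cdot|x)$ to be undefined or zero. This contradicts $p(x)>0$ unless we accept the footnote's convention that conditionals are allowed to sum to $0$. I expect this case analysis to be the main obstacle: one must show that the reversibility condition forces each row and each column to be either entirely zero or entirely inside one component with ratios matching the conditional. The matching itself follows from the defining ratios $p(x,\yp)/p(x,y)=p(\yp|x)/p(y|x)$ and normalisation.

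\emph{Weak compatibility.} For sufficiency, take a component $C$ whose outgoing edges all have reverses and whose loops all have ratio $1$. Apply the construction above on $C$ alone and set $p=0$ elsewhere. Every edge leaving $C$ has a reverse and therefore stays in $C$. So rows and columns meeting $C$ are supported in $C$, the conditionals are matched there, and everywhere else they are undefined. For necessity, given a weak joint, consider its support $S$. Each point of $S$ is a node, and the argument from the strong case shows that edges out of $S$ land in $S$ and are reversible, with telescoping ratios. Hence $S$ is closed and reversible, and any strongly connected component inside $S$ satisfies the required conditions.
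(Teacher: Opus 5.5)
Your route is the paper's: telescoping edge ratios for necessity, a path-ratio joint with one root and a positive weight per connected component for sufficiency, and, for the weak case, restriction to a closed component followed by extension by zero. Your necessity arguments, strong and weak, are fine.

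The gap is the case you flagged in the sufficiency direction, and your proposed escape points the wrong way. The footnote's convention allows $p(\cdot|x)\equiv 0$ only when $p(x)=0$. Suppose row $x$ has $p(\cdot|x)\equiv0$ but contains a node $(x,\yp)$ with $p(x|\yp)>0$. Your construction gives that node positive mass, so $p(x)>0$, the conditional $p(\yp|x)$ is defined and positive, and it contradicts the input. No other joint can do better, because your own necessity argument shows that strong compatibility forces positive mass at every node. Concretely, take $\X=\{x\}$, $\Y=\{y\}$ with $p(y|x)=0$ and $p(x|y)=1$. This instance meets both the strong and the weak conditions of the lemma as literally stated: its only edge is an $x$-self-loop, which is its own reverse and has ratio $1$. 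Yet it is not even weakly compatible. So the claim you say ``one must show'' is false: reversibility alone does not force every row and column either to be empty or to lie inside the support of its conditional.

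The missing ingredient is normalisation of the input conditionals. You already use it in your final sentence to get $p(x,\yp)/p(x)=p(\yp|x)$. If each $p(\cdot|x)$ and $p(\cdot|y)$ sums to $1$, then every row containing a node has some $p(y|x)>0$. Your observation that the edge $(x,\yp)\to(x,y)$ must be reversed then puts every node of the row in the support of $p(\cdot|x)$, so the bad case cannot occur. The same argument, applied to edges leaving a component $C$, completes the weak sufficiency.

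Equivalently, under the zero-sum convention, ``every edge has a reverse edge'' must be strengthened to ``the two conditionals have the same support''. That is what the algorithm in \cref{sec:zero-upper} actually checks, and what \cref{sec:alg} identifies with the existence of all reverse edges. The paper's sketch, via \citet{slavkovic2006toric} or the construction of \cref{sec:nonzero-upper}, also relies on normalisation implicitly. With that assumption made explicit, your proof is complete.
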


\begin{proof}
The strongly compatible case is proved by \citet{slavkovic2006toric}
(our probability ratios correspond to their binomials).
It can also be proved by generalising
the construction in~\cref{sec:nonzero-upper},
constructing the joint in terms of probability ratios of paths,
and fixing not just one point
but one point in each connected component of the graph.

Now suppose the conditionals are weakly compatible.
The joint must be non-zero at some node.
Furthermore, that node's strongly connected component
cannot have edges to other components
(if an edge $a\rightarrow b$
does not have a reverse edge $b\rightarrow a$,
the joint must have $p(a)=0$,
contradicting that the joint is non-zero in the component).
Restricting the conditionals to this strongly connected component
is therefore well-defined and gives the strongly compatible case,
and so all loops in the component have ratio~1.
Conversely, suppose there is some strongly connected
where all loops have ratio~1.
Restricting the conditionals to this component
gives the strongly compatible case,
and so we have a compatible joint over this component.
This extends to a weakly compatible joint over the whole space,
by setting the joint to~0 outside the component.
\end{proof}

An example is given in~\cref{eqn:eg-weak},
using rows for $x$ and columns for $y$,
showing weakly compatible conditionals
with two different violations of strong compatibility.
There are three strongly connected components,
$\{(1,1),(1,2),(2,1),(2,2)\}$,
$\{(3,3),(3,4),(4,3)\}$,
and $\{(4,4),(4,5),(5,4),(5,5)\}$.
The first component has inequality in~\cref{eqn:verify},
while the second component has directed edges
to the third (but not vice versa).
Either of these violations is sufficient to show that
the conditionals are not strongly compatible;
for weak compatibility, the joint must assign
zero probability to both the first and second components.
The third component has no violations of either type
and this is sufficient to show that
the conditionals are weakly compatible.
\begin{align}
p(y|x) &=\begin{pmatrix}
	\frac{1}{2} & \frac{1}{2} & 0 & 0 & 0 \\[.5ex]
	\frac{1}{2} & \frac{1}{2} & 0 & 0 & 0 \\[.5ex]
	0 & 0 & \frac{1}{2} & \frac{1}{2} & 0 \\[.5ex]
	0 & 0 & 0 & \frac{1}{2} & \frac{1}{2} \\[.5ex]
	0 & 0 & 0 & \frac{1}{2} & \frac{1}{2}
\end{pmatrix}
&p(x|y) &=\begin{pmatrix}
	\frac{2}{3} & \frac{1}{3} & 0 & 0 & 0 \\[.5ex]
	\frac{1}{3} & \frac{2}{3} & 0 & 0 & 0 \\[.5ex]
	0 & 0 & \frac{1}{2} & 0 & 0 \\[.5ex]
	0 & 0 & \frac{1}{2} & \frac{1}{2} & \frac{1}{2} \\[.5ex]
	0 & 0 & 0 & \frac{1}{2} & \frac{1}{2}
\end{pmatrix}
\label{eqn:eg-weak}
\end{align}

We have the following result, with the proof given in~\cref{sec:zero-upper,sec:zero-lower}:
\begin{thm}
Given tractable circuit representations of the conditionals $p(x|y)$ and $p(y|x)$,
determining whether they are
strongly (weakly) compatible is PSPACE-complete.
\label{thm:zero}
\end{thm}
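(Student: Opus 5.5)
The plan is to prove both bounds through \cref{lem:loop-ratio}. That lemma reduces compatibility to reachability-type properties of the exponentially large non-zero graph, whose edges are decidable in polynomial time from the circuits.

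\textbf{Upper bound.} For strong compatibility, we co-guess a violation. A violation is either an edge $a\rightarrow b$ with no reverse edge, which is checkable directly, or a loop with ratio $\neq 1$. Since loops can be exponentially long, we cannot store them, and the product of ratios could have exponentially many bits. Instead we fix a canonical value for each strongly connected component, in the spirit of the construction in \cref{sec:nonzero-upper}. All loops have ratio~1 iff every pair of paths with the same endpoints has the same ratio. By Savitch's theorem and PSPACE closure under complement, it suffices to give an NPSPACE procedure that finds a loop with ratio $\neq 1$. The difficulty is tracking the ratio in polynomial space.

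My first idea is to track the ratio modulo a randomly or nondeterministically guessed prime $q$ of polynomial bit length. Each edge ratio is a quotient of polynomial-bit integers, so it is invertible modulo $q$ unless $q$ divides a numerator or denominator. A loop whose rational ratio differs from~$1$ has some witness prime, but the numerator and denominator of the exact ratio can have exponentially many bits, so the differing prime might need to be large.

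The fallback is to track prime-exponent vectors. Each edge ratio factors over polynomially many primes per edge, but the total set of primes across all edges is unbounded, so this fails too. A cleaner approach is to guess one prime $r$ (of polynomial bits, since it divides some circuit output) together with the exponent of $r$ in the loop ratio, and track that exponent along the walk. If the loop ratio is not $1$, some prime has nonzero net exponent. Each step's exponent change is computable in polynomial time by repeated division, and the running count stays polynomial in bits. This gives an NPSPACE algorithm for incompatibility, hence PSPACE for strong compatibility.

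For weak compatibility, we nondeterministically guess a node $v$ and verify three things with co-NPSPACE subroutines: every node reachable from $v$ reaches back to $v$; every edge out of the component has a reverse edge; and no prime exponent is unbalanced around a loop through the component.

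\textbf{Lower bound.} We reduce from reachability in succinctly described graphs: given a polynomial-space machine's configuration graph, decide whether accepting is reachable. We make the graph reversible by standard deterministic-to-reversible simulation (Bennett/Lange--McKenzie--Tapp), so that computation traces form paths. We encode $x$ and $y$ as configuration-step pairs, so that the non-zero graph is a long zigzag path, each move alternating between changing $x$ and changing $y$. All ratios are set to~$1$ except at the accepting configuration, where the path is closed into a loop through a gadget with a non-unit ratio, using weights like $\frac{1}{3},\frac{2}{3}$ as in \cref{eqn:eg-incomp}.

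This yields strong incompatibility iff the machine accepts. Since PSPACE is closed under complement, this gives hardness for strong compatibility. For weak compatibility, we ensure the computation path is the only component, for example by attaching a trivially compatible isolated component only when rejecting, or by adding a one-way edge out.

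The main obstacle I expect is the zigzag embedding: arranging that each node has only the intended neighbours, so that the conditionals remain normalised with polynomial-size circuits while the reversibility is preserved.
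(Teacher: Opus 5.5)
Your upper bound is essentially the paper's: a nondeterministic walk that tracks the valuation of one guessed prime, with the weak case handled by checking the component of each candidate start node. Two details need adding. You need a step counter bounding the walk length (the paper uses $2^{\min\{n,m\}+1}$), since otherwise the tracked exponent is not polynomially bounded. You also need to check that $r$ is prime, since otherwise the exponent is not additive along the walk.

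For strong hardness you take a genuinely different route from the paper, and it works. The paper reduces from succinct st-connectivity. It uses random walks on a subdivided graph, where computable degrees make the conditionals tractable, plus a two-copy trick. You instead use a reversible simulation, so that configurations have in- and out-degree at most one. Put non-zero entries at $(c,c)$ and $(c,\mathrm{next}(c))$ with weights $\frac{1}{2}$ or $1$, and add the entry $(c_{\mathrm{acc}},c_0)$ with weights $\frac{1}{3},\frac{2}{3}$ in column $c_0$. This creates a loop with ratio $\neq 1$ exactly when the computation from $c_0$ reaches $c_{\mathrm{acc}}$. Otherwise the new entry joins two distinct tree components and creates no loop, even if $c_0$ has a predecessor. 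You need a unique accepting configuration with no successor, a machine that always halts, and a polynomial-time predecessor map. The zigzag embedding you flag as the main obstacle is in fact routine. This route gives very simple conditionals at the cost of importing Lange--McKenzie--Tapp.

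The genuine gap is the weak case. Your non-zero graph contains a component for every configuration of the reversible machine. Those not on the computation from $c_0$ form exponentially many garbage paths and cycles, and all of them have only reversible edges and unit ratios. By \cref{lem:loop-ratio} your conditionals are therefore weakly compatible whatever the machine does.

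You cannot ``ensure the computation path is the only component'', because deciding which configurations lie off that path is the reachability problem you are reducing from. Neither patch helps. Attaching a component ``only when rejecting'' presupposes the answer. A one-way edge out of the computation component removes the wrong component, and one-way edges out of the garbage components require recognising them. The paper avoids this because its weak reduction starts from succinct connectivity, where the existence of another component is exactly the hard question.

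Within your framework, a fix is to not close the loop in the weak case and instead make every unwanted component incompatible using locally recognisable features:
\begin{itemize}
\item Add a step counter to configurations, so the configuration graph is acyclic and $c_0$ at time $0$ has no predecessor.
\item Attach an incompatible gadget, such as a copy of the 6-cycle of \cref{eqn:eg-incomp} on fresh tagged values, at every configuration with no predecessor other than $c_0$.
\item Attach the same gadget at every end of a path (halting or time-out) that is not accepting.
\end{itemize}
Then the only component that can be closed and compatible is a path from $c_0$ to an accepting configuration, so the conditionals are weakly compatible iff the machine accepts.
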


However, the existence of a joint does not mean that
the joint is also succinctly representable.
For example, consider the following conditionals,
where $x,y\in\bitn$,
and for ease of notation,
we view $x$ and $y$ as integers represented in binary:
\begin{align}
p(y|x)&=
\begin{cases}
1 & \textrm{if }x=y=2^n-1\\
\frac{2}{3} & \textrm{else if }y=x\\
\frac{1}{3} & \textrm{else if }y=x+1\\
0 & \textrm{otherwise}
\end{cases}
& p(x|y)&=
\begin{cases}
1 & \textrm{if }x=y=0\\
\frac{2}{3} & \textrm{else if }x=y-1\\
\frac{1}{3} & \textrm{else if }x=y\\
0 & \textrm{otherwise}
\end{cases}
\label{eqn:eg-doub-exp}
\end{align}

These conditionals are compatible,
with non-zero joint probability where $y=x$ or $y=x+1$.
The non-zero graph forms a chain
(so no non-trivial loops exist,
and so there are no corresponding constraints),
with $(x,x)$ connected to $(x,x+1)$ and $(x-1,x)$.
The chain is exponentially long (with $2^{n+1}-1$ nodes),
and each subsequent node has half the previous node's probability,
so the the probability of the final node is doubly exponentially small,
which means the denominator's binary representation
has an exponential number of digits.\footnote{%
	More precisely, we have $p(2^n-1,2^n-1)=\frac{1}{2^{2^{n+1}-1}-1}$.
	More generally, for a polynomially bounded circuit,
	we could have each edge with an exponential ratio $2^{p(n)}$,
	for some polynomial $p(n)$,
	but the probability ratio between the first and last nodes
	is still only doubly exponential, $2^{p(n)(2^{n+1}-2)}$,
	not triply exponential.
}

This motivates two increasingly strict notions of compatibility,
as well as a weaker notion of a succinct family of distributions.\footnote{%
	As with the definitions of tractable circuits
	and succinctly representable families of distributions
	in \cref{defn:tract-circ,defn:succ-fam},
	the linear bounds in \cref{defn:comp-tract-val,defn:comp-tract-joint}
	could be replaced with a polynomial bound
	without affecting \cref{defn:succ-val-fam}.
	Similarly, in \cref{defn:comp-tract-val},
	the more obvious definition would directly use
	probabilities, rather than ratios.
	Using ratios simplifies the statement and proof
	of \cref{thm:tract-valued},
	while giving the same families of distributions.
}

\begin{defn}
Two circuit representations of conditionals
are \textbf{strongly (weakly) compatible with tractable values},
if they are strongly (weakly) compatible with a joint distribution
where every ratio of two joint probabilities
has numerator and denominator with
binary length bounded by the total size of the two circuits.%
\label{defn:comp-tract-val}
\end{defn}

\begin{defn}
Two circuit representations of conditionals
are \textbf{strongly (weakly) compatible with a tractable joint},
if they are strongly (weakly) compatible and the joint distribution
can be represented by a tractable circuit with size
bounded by the total size of the two circuits.%
\label{defn:comp-tract-joint}
\end{defn}

\begin{defn}
A family of distributions is \textbf{succinctly valued}
if the binary length of the numerator and denominator of each probability
is polynomially bounded with respect to the number of dimensions.
\label{defn:succ-val-fam} 
\end{defn}

The conditionals in~\cref{eqn:eg-doub-exp}
are compatible, but not compatible with tractable values.
Compatibility with a tractable joint
implies compatibility with tractable values,
but the converse might not be true.

We have the following two results.
The proof of \cref{thm:tract-valued}
is given in~\cref{sec:zero-upper,sec:zero-lower},
alongside the proof of \cref{thm:zero}.
The proof of \cref{thm:zero-tract}
is given in \cref{sec:tract}.

\begin{thm}
Given tractable circuit representations of the conditionals $p(x|y)$ and $p(y|x)$,
determining whether they are
strongly (weakly) compatible with tractable values
is PSPACE-complete.
\label{thm:tract-valued}
\end{thm}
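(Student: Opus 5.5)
The plan is to treat \cref{thm:tract-valued} alongside \cref{thm:zero}: prove membership in PSPACE by a nondeterministic polynomial-space search for a \emph{violation}, then use Savitch's theorem and closure of NPSPACE under complement. For hardness, I would reuse the reduction behind \cref{thm:zero} and check that its yes-instances admit tractably valued joints.

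\textbf{Upper bound, strong case.} By \cref{lem:loop-ratio}, strong compatibility means two things: every edge of the non-zero graph has a reverse edge, and every loop has probability ratio~$1$. Missing reverse edges are found by guessing a single edge, so that part of the check is in co-NP. For the loop condition, probability ratios of long paths can be doubly exponential (as in \cref{eqn:eg-doub-exp}), so we cannot simply compute them. The tractable-value bound helps here. If a tractably valued joint exists, then every path from $a$ to $b$ has ratio $p(b)/p(a)$, and this ratio has numerator and denominator of bounded length. Let $B$ be the total circuit size.

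In each connected component, fix a canonical root $r$, say the lexicographically least node. Reachability in the exponential graph is decidable in polynomial space, so $r$ can be recognised in PSPACE. A certificate of \emph{non}-membership is then a walk, guessed edge by edge, of one of two kinds:
\begin{itemize}[label={}]
\item (i) a walk from $r$ whose running product, kept in lowest terms, exceeds $B$ bits;
\item (ii) a closed walk from $r$ to $r$ whose running product stays within $B$ bits throughout but ends with a value different from~$1$.
\end{itemize}
Only the current node and the current bounded product need to be stored, so the search uses polynomial space.

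\emph{Soundness.} If a tractably valued joint exists, every running product equals a ratio of joint values. So neither kind of certificate can exist.

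\emph{Completeness.} Suppose no certificate of kind (i) exists. Then every walk from $r$ has bounded products, so every closed walk through $r$ is checked by (ii) and must have ratio~$1$. Any loop $L$ through some node $v$ gives two closed walks through $r$: $P L P^{-1}$ and $P P^{-1}$, where $P$ is a path from $r$ to $v$ and $P^{-1}$ uses the reverse edges. The ratio of $L$ is the quotient of their ratios, so $L$ also has ratio~$1$. By \cref{lem:loop-ratio} the conditionals are then strongly compatible. The joint values within a component, relative to $r$, are the bounded path products.

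\emph{Cross-component ratios.} I would give every component's root the same joint value. A ratio across components is then a product of two bounded ratios. This is a polynomial rather than a linear bound, which I would absorb by the padding remark in the footnote to \cref{defn:comp-tract-val}. This detail needs care but is not a real obstacle.

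\textbf{Upper bound, weak case.} First guess a root $r$. Then, in polynomial space, check that every outgoing edge from $r$'s strongly connected component has a reverse edge; this is a co-NPSPACE check, hence in PSPACE. Then run the same loop-and-bound test restricted to that component. Only one component carries mass, so no cross-component ratios arise. Hence both problems are in PSPACE.

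\textbf{Lower bound.} I would take the PSPACE-hardness reduction used for \cref{thm:zero} (from a succinct reachability problem or TQBF) and verify that, on yes-instances, the joint it yields has tractable values. On no-instances the conditionals are not even compatible, so they are certainly not compatible with tractable values. The natural design for this is to make every edge of the non-zero graph in the relevant component have ratio exactly~$1$, with conditionals uniform over their supports. The compatible joint is then uniform on the component, or on each component with equal weights, and all ratios are $1$ or $0$-free constants.

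\textbf{Main obstacle.} I expect the main difficulty to be this hardness step. The configuration-graph encoding must be made reversible so that edges have reverse edges, while keeping its ratios equal to~$1$, and an unwanted chain like \cref{eqn:eg-doub-exp} must not reappear. If the existing reduction uses non-unit ratios to create violations, I would move the violation into a small gadget attached at the accepting configuration: a local four-point inequality as in \cref{eqn:verify}. That keeps all other ratios equal to~$1$.
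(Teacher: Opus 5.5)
Your overall architecture matches the paper's: membership via a nondeterministic walk whose running ratio is kept in memory bounded by the total circuit size $B$ (with Savitch and closure under complement), and hardness by reusing the reductions for \cref{thm:zero}. There is, however, a concrete gap in your upper bound. Your certificates only walk from the canonical root $r$, so their absence bounds only the ratios $p(v)/p(r)$. A ratio $p(b)/p(a)$ between two other nodes of the same component is a quotient of two such ratios and can need about $2B$ bits. For example, suppose $p(a)/p(r)=1/D$ and $p(b)/p(r)=N$, with $N,D$ coprime and each just under $B$ bits (e.g.\ two chains out of $r$, one made of ratio-$\frac12$ steps and one of ratio-$3$ steps). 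Then no certificate of either kind exists, yet $p(b)/p(a)=ND$ violates \cref{defn:comp-tract-val}, so your procedure accepts a no-instance.

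The padding footnote cannot absorb this. It says the linear bound may be replaced by a polynomial one without changing the \emph{families} of \cref{defn:succ-val-fam}. In the decision problem of \cref{thm:tract-valued}, by contrast, the bound is fixed at $B$, and relaxing it to $2B$ changes the set of yes-instances. The repair is what the paper does in the strong case: guess the starting node of the walk anywhere, reject as soon as the running ratio overflows, and check for ratio~$1$ whenever the walk returns to its start. Every within-component ratio is then checked directly, loops are checked directly, and no canonical root or reachability test is needed (your $PLP^{-1}$ argument is correct but becomes unnecessary). In your weak case, likewise let the walk start at any node reachable from the guessed root, rather than only at the root. Your equal-root-value treatment of cross-component ratios meets the same objection, since it only yields $2B$. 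Note that the paper's algorithm bounds only ratios along paths and does not treat cross-component ratios separately; in any case, padding cannot be used to pass from $2B$ back to $B$.

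Your hardness plan is the paper's, and the verification is easier than you anticipate. The reductions for \cref{thm:zero} are from succinct connectivity and st-connectivity of undirected graphs, with both conditionals given by a uniform random walk. On yes-instances the joint within each component is the stationary distribution, which is proportional to node degree. Every ratio is therefore a ratio of degrees, bounded by the number of nodes and hence of at most $n$ bits. No reversible configuration-graph encoding and no all-unit edge ratios are required.
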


\begin{thm}
Given tractable circuit representations of the conditionals $p(x|y)$ and $p(y|x)$,
determining whether they are
strongly (weakly) compatible with a tractable joint
is $\Sig$-complete.%
\label{thm:zero-tract}
\end{thm}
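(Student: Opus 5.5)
We prove membership in $\Sig$ and $\Sig$-hardness separately, treating the strong and weak variants in parallel. The membership argument uses \cref{lem:loop-ratio} in the same way that \cref{sec:nonzero-upper} used the four-point test. The $\Sig$ structure is to existentially guess a tractable circuit $J$ for the joint and then universally check a polynomial-size certificate. The guess is a circuit of size at most the total size of the two conditional circuits, with exactly two outputs. It is a polynomial-length string, so an existential quantifier can range over it.

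Given $J$, the universal part checks the following for every pair $(x,y)$ and every pair of partners $\xp,\yp$.
\begin{enumerate}
\item Every node of $J$ has binary length bounded by the size of $J$. This is checkable by evaluating the circuit and aborting once lengths exceed the bound.
\item Whenever $p(x,y)>0$, $p(x,y)\,p(\xp|y)=p(\xp,y)\,p(x|y)$ and $p(x,y)\,p(\yp|x)=p(x,\yp)\,p(y|x)$.
\end{enumerate}
The second condition is a local condition. I must check that local conditions suffice to force the marginal conditional identities. If the joint is positive at $(x,y)$ and these proportionalities hold for all $\xp$, then $p(\cdot,y)$ is proportional to $p(\cdot|y)$. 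The promise/normalisation convention from the footnote to \cref{thm:nonzero} then gives equality up to normalisation.

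The weak and strong variants add different conditions. For weak compatibility we additionally require that $J$ is not identically zero. This is the only existential-over-inputs requirement, and it can be absorbed into the outer existential block by guessing a witness $(x_0,y_0)$ with $J(x_0,y_0)>0$. For strong compatibility we also require, universally, that if the column sum at $y$ is zero then $p(x|y)=0$ for all $x$. Summing over $x$ is not polynomial, so I would instead use the row and column structure. The condition ``$p(\cdot,y)\equiv 0$ implies $p(x|y)=0$'' is equivalent to the universal statement ``if $p(x|y)>0$ then $J(x,y)>0$'', because positivity of $p(x|y)$ must coincide with positivity of the joint wherever the column is live. So strong compatibility requires that the support of $J$ equals the node set of the non-zero graph, and this is checked pointwise. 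All checks are polynomial-time with universal quantification over $(x,y,\xp,\yp)$, so the problem is in $\Sig$.

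For hardness I would reduce from the $\Sig$-complete problem $\exists u\,\forall v\,\varphi(u,v)$, modelled on the \TAUT{} reduction of \cref{sec:nonzero-lower}. The idea is to make the non-zero graph a disjoint union of $2^k$ blocks indexed by $u$, where $x=(u,v)$ and $y=(u,w)$. Block $u$ uses the gadget from \cref{sec:nonzero-lower} with $\varphi(u,\cdot)$: it is internally compatible with a uniform joint iff $\forall v\,\varphi(u,v)$. For the weak version, weak compatibility holds iff some block is compatible, which is exactly the $\Sig$ formula. The guessed tractable joint is uniform on that block and zero elsewhere, and it is computable by a circuit that compares $u$ to a hard-coded string. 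After padding the conditional circuits, this joint fits within the size bound.

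For the strong version every block must be fine, so block $u$ must be made to depend on whether the \emph{joint circuit} can encode $u$. To do this I would chain the blocks together with the doubling chain of \cref{eqn:eg-doub-exp}, which forces any joint along the chain to have doubly exponentially small values unless the chain is cut. The cut is available only at a block where the formula holds, and the cut at block $u$ is a one-way edge that strong compatibility forbids unless $\forall v\,\varphi(u,v)$ fails in a controlled way. The succinct joint then must ``know'' a good $u$ to stay tractable.

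The main obstacle is the strong-compatibility hardness. I would first try to reduce weak to strong by symmetrising: add reverse edges to the gadget so that each failing block becomes compatible but only with an exponentially large ratio, which makes the existence of a tractable circuit equivalent to the existence of a good $u$.
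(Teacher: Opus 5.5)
Your membership argument and your weak-hardness reduction are sound and essentially the paper's. For membership you guess a joint circuit and a point of positive joint mass, then check ratio agreement universally over pairs. For weak hardness you use disjoint blocks indexed by $u$, each carrying the \TAUT{} gadget, with the joint uniform on one good block. One small repair is needed in the universal block: also require that $J(x,y)>0$ implies $p(x|y)>0$ and $p(y|x)>0$. Otherwise a conditional column that is identically zero passes your proportionality test against a live joint column.

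The genuine gap is strong hardness. You give no working reduction, and the ideas you sketch cannot work. As your own membership argument shows, a strongly compatible joint is positive on \emph{every} node of the non-zero graph, and it is fixed up to one scalar per connected component. So the joint has no freedom to ``cut'' a chain or to ignore a bad block.
\begin{itemize}
\item A one-way edge is determined by the conditionals alone. By \cref{lem:loop-ratio} it makes the instance strongly incompatible wherever it is placed, so a cut realised as a one-way edge at a good block would make exactly the good instances fail.
\item Suppose instead that failing blocks are made compatible but with ratios too large for a tractable circuit. Every such block still carries mass and rules out a tractable joint. A tractable joint then exists iff \emph{no} block fails, i.e.\ $\forall u\,\forall v\,\varphi(u,v)$, which is a co-NP condition rather than $\exists u\,\forall v\,\varphi(u,v)$.
\item If the links of a doubling chain exist only conditionally, one good $u^\star$ splits it into pieces that can still be exponentially long, so doubly exponential ratios survive.
\end{itemize}

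The missing idea is to encode the quantifiers in the topology of the non-zero graph through a single loop. Then, on these instances, strong compatibility itself is equivalent to the $\Sig$ sentence, and tractability of the joint comes for free. In \cref{sec:tract} the construction is as follows.
\begin{itemize}
\item The link from $(u,u)$ to $(u+1,u+1)$ passes through $(u,(u,v))$, $((u,v),(u,v))$ and $((u,v),u+1)$, where the middle node exists iff $\varphi(u,v)=\bot$. With uniform conditionals each such link has ratio~1, and link $u$ exists iff $\exists v\,\neg\varphi(u,v)$ (an OR over $v$).
\item The links are chained for $u=0,\dots,2^n-1$ (an AND over $u$), and the loop is closed via $(2^n,2^n)$, $(2^n,0)$, $(0,0)$ with ratio~2.
\item If $\forall u\,\exists v\,\neg\varphi(u,v)$, the loop closes with ratio $2\neq 1$, so the instance is strongly incompatible by \cref{lem:loop-ratio}.
\item Otherwise the loop is broken at some witness $u^\star$ and no non-trivial loop constraint remains. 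A compatible joint is then piecewise constant, with values $1$ and $2$ separated at $u^\star$, and it is computable by comparing against a hard-coded $u^\star$.
\end{itemize}
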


\begin{cor*}
Assuming $\Sig\neq\textrm{PSPACE}$,
there are families of succinctly representable conditionals
whose joints are succinctly valued
but not succinctly representable.
\end{cor*}

To prove the complexity upper bounds,
the basic idea is to consider constraints defined by loops,
somewhat similarly to \citet{slavkovic2006toric}.
However, a PSPACE algorithm over succinct circuits
cannot explicitly store arbitrary paths or subgraphs,
since there could be doubly exponentially many of them.
However, there is a high level of redundancy in the set of constraints,
which means that it is not necessary to iterate through all of them.
While the full set of constraints is necessary
to characterise the whole space of compatible conditionals
(as done by \citeauthor{slavkovic2006toric}),
we can determine compatibility of any particular pair of conditionals
by considering a smaller set of constraints.

The proof is simplified by providing a non-deterministic algorithm
which walks over the non-zero graph,
rejecting if we return to the starting node
and the probability ratio is not~$1$.
The redundancy in the constraints allows us
to only keep track of the ends of the path and its probability ratio,
while other information about the path can be forgotten.
A more precise characterisation of the redundancy in the constraints
is given in~\cref{sec:alg},
along with a sketch of a deterministic algorithm.

\subsection{Proof of upper bound: Succinct Compatibility is in PSPACE}
\label{sec:zero-upper}

We have four results to prove,
for strong/weak compatibility with/without tractable values
(\cref{thm:zero,thm:tract-valued}).

We start with strong compatibility with tractable values.
Since $\textrm{NPSPACE}=\textrm{PSPACE}$ \citep{savitch1970nondet},
we define the following non-deterministic algorithm.
Furthermore, since PSPACE is closed under complement,
for ease of exposition we define the algorithm
such that it rejects if a single computation path rejects.

Non-deterministically pick a point $(x,y)\in\bitnm$.
If exactly one of $p(x|y)$ and $p(y|x)$ is zero, reject;
if both are zero, accept.
Otherwise, record $(x,y)$ as the starting node,
initialise $(x,y)$ as the current node of a non-deterministic walk,
initialise a counter (from $0$ to $2^{\min\{n,m\}+1}$),
and initialise $1$ as the current probability ratio,
with the amount of memory for the numerator and denominator
set as the total size of the two input circuits.
At each step of the walk,
non-deterministically choose the next node,
changing only one of $x$ and $y$ in the current node.
If exactly one of the edges to and from that node exists, reject;
if neither exist, accept;
otherwise (when both exist), multiply the current probability ratio
by the edge's probability ratio
(simplifying numerator and denominator if possible),
update the current node of the walk,
and increment the counter.
If the probability ratio exceeds the allowable memory, reject.
If the current node is equal to the starting node,
check whether the probability ratio is equal to~$1$:
accept if it is, and reject if it is not.
If the counter has reached its max value, accept;
otherwise, continue the walk.

If the conditionals are strongly compatible with tractable values,
they are non-zero for the same values
and every loop has probability ratio~$1$
(by \cref{lem:loop-ratio}),
and the probability ratio of every path fits into memory
(be definition).
Therefore, for any computation path, the algorithm accepts.
Conversely,
if the conditionals are not strongly compatible with tractable values,
either there is some $(x,y)$ where exactly one conditional is non-zero,
or there is some loop whose probability ratio is not~$1$
with the length of the loop bounded by $2^{\min\{n,m\}+1}$
(any longer loop can be decomposed into shorter loops),
or some probability ratio cannot fit into memory.
In each case, there is a rejecting computation path.

We now turn to strong compatibility without tractable values.
A path's probability ratio is 1,
iff all terms cancel,
iff all prime factors of those terms cancel.
We know that all values of circuit nodes have at most $k$ digits,
where $k$ is the total circuit size.
and so we only need to consider primes up to $2^k$.
We can amend the algorithm,
non-deterministically choosing a prime~$p$ up to this size,
at the beginning of the algorithm.
Instead of storing the probability ratio,
we store only the exponent of $p$,
i.e.\ the number of times it divides the numerator or denominator,
while all other prime factors are ignored.
Instead of checking whether the probability ratio is~1,
we check whether the exponent of $p$ is~0.
The stored exponent is bounded by
the maximum exponent in one edge ratio ($k$)
times the maximum path length ($2^{\min\{n,m\}+1}<2^k$),
and so it can be stored in polynomial space with respect to~$k$.

Finally, we turn to weak compatibility (with or without tractable values).
The conditionals are weakly compatible
iff there is a strongly connected component
without outgoing edges to other components
where all loops have ratio~1 (\cref{lem:loop-ratio}).
We amend the algorithm to take a starting node as input
(instead of choosing it non-deterministically).
When checking the existence of edges,
an incoming edge without an outgoing edge is not rejected,
but an outgoing edge without an incoming edge is still rejected.
This gives an $\textrm{NPSPACE}=\textrm{PSPACE}$ algorithm for compatibility
of the strongly connected component that includes the starting node.
Since $\textrm{PSPACE}^\textrm{PSPACE}=\textrm{PSPACE}$,
we can iterate over all starting nodes,
and accept if a single one accepts.

\subsection{Proof of lower bound: Succinct Compatibility is PSPACE-hard}
\label{sec:zero-lower}

For circuit representations of graphs,
the connectivity problem and st-connectivity problem
are both is PSPACE-complete
\citep{papadimitriou1986succinct,balcazar1992succinct}.

Given succinctly encoded undirected graph $(V,E)$, define a new graph
with nodes $V \cup (V\times V)$,
and edges between $v$ and $(v,\vp)$ for each $v\vp\in V$
as well as between $(v,\vp)$ and $(\vp,v)$ if $(v,\vp)\in E$.
This graph can also be encoded succinctly,
and it is connected iff the original graph is connected.

Furthermore, the degree of each node in the new graph
can be determined in polynomial time:
each node $v$ has degree $|V|$,
and each node $(v,\vp)$ has degree~2
if $(v,\vp)\in E$ and degree~1 otherwise.
We can therefore define a uniform random walk on the graph,
where the conditional $p(y|x)$ of taking one step
can be calculated in polynomial time.
If we also define $p(x|y)$ in the same way,
the two are compatible, with the joint being
a stationary distribution of the random walk over the graph
(unique iff the graph is connected).

We can reduce connectivity to weak compatibility
by modifying the $p(x|y)$ distribution at one point
(e.g.\ as if some node $v$ has an extra self-connection).
Then the conditionals are not compatible within $v$'s connected component.
Therefore, the conditionals are weakly compatible
iff there is another connected component, i.e.\ the graph is disconnected.

We can reduce st-connectivity to strong compatibility
as follows.
We add an additional node $u$ to the new graph,
adding non-zero probability to $(t,u)$, $(u,u)$, and $(u,s)$.
If $s$ is connected to $t$, this creates a loop
whose probability ratio must be~$1$;
if they are not connected, no new constraints are introduced.
with $p(Y=u|X=t)\propto 1$ (scale down rest),
$p(Y=s|X=u)=p(Y=u|X=u)=\frac{1}{2}$,
$p(X=t|Y=u)=p(X=u|Y=u)=\frac{1}{2}$,
$p(X=u|Y=s)\propto 1$ (scale down rest).
The conditionals are strongly compatible
iff the random walk admits a stationary distribution
where $s$ and $t$ have equal probability.
Now duplicate the entire graph,
but with $p(X=t|Y=u)=\frac{1}{3}$ and $p(X=u|Y=u)=\frac{2}{3}$
in the second copy.
The conditionals are strongly compatible
iff the random walk admits a stationary distribution
where $t$ has twice the probability of $s$.
The random walk admits both stationary distributions
iff $s$ and $t$ are not connected.
Therefore, the conditionals are strongly compatible
iff $s$ and $t$ are not connected.

When compatible, these reductions always give circuits compatible with tractable values,
because, within each connected component, a node's probability
is proportional to the node's degree.
Each probability ratio is therefore equal to
the ratio of the nodes' degrees.
The maximum degree of a node is bounded by the number of nodes,
which is always less than $2^n$,
where $n$ is the number of inputs in the circuit,
so probability ratios have numerator and denominator
both with at most $n$ bits.

\subsection{Proof: Succinct Compatibility with a Tractable Joint is $\Sig$-complete}
\label{sec:tract}

Weak compatibility with a tractable joint is in $\Sig$,
since a necessary and sufficient condition can be expressed
with $\exists\forall$ quantifiers:
there exists a tractable circuit for the joint
and there exists a point $(x_0,y_0)$ with non-zero joint probability,
such that for every triple $(x,y_1,y_2)$ or $(x_1,x_2,y)$,
the conditional ratio agrees with the joint ratio,
i.e.\ $p(y_1|x)p(x,y_2)=p(y_2|x)p(x,y_1)$
and $p(x_1|y)p(x_2,y)=p(x_2|y)p(x_1,y)$, respectively.

Strong compatibility with a tractable joint is similarly in $\Sig$,
but if both joint probabilities are zero,
we require both conditional probabilities to be zero.

Weak compatibility with a tractable joint is $\Sig$-hard,
by reduction from the $\Sig$-hard problem of
determining whether a boolean proposition
$\exists u \forall v \varphi(u,v)$ is true.
Given $\varphi(u,v)$,
we define $x$ and $y$ each of the form $(u,v)$.
We now define conditional distributions such that
each $u$ forms one connected component
(non-zero probability only when $x=(u,v_1)$ and $y=(u,v_2)$),
and a single false value for some $v$
forces incompatibility in that component
(using the same construction as in~\cref{sec:nonzero-lower}).
So $\exists u \forall v \varphi(u,v)$
iff there is some compatible component,
i.e.\ the conditionals are weakly compatible.
If the joint distribution exists, it is uniform, therefore tractable.

Strong compatibility with a tractable joint is also $\Sig$-hard.
For $\varphi(u,v)$,
we construct a potentially long loop,
where each $u$ is one potential link,
and any false $(u,v)$ constructs the link
(with probability ratio~$1$);
furthermore, we add a potentially contradictory link from start to end
(with probability ratio~$2$).
More precisely, viewing $u$ as encoding an integer from $0$ to $2^n-1$,
we define $x$ and $y$
to each be of the form $(u,v)$ or $u$ or $2^n$.
We define both conditionals to assign non-zero probability mass
to the following points:
$(u,u)$;
$(2^n,2^n)$;
$(2^n,0)$;
$(u,(u,v))$;
$((u,v),u+1)$;
and for each $\varphi(u,v)=\bot$,
$((u,v),(u,v))$.
All distributions are uniform except for $p(x|(0,0))$
where we set $p((2^n,0)|(0,0))=2p((0,0)|(0,0))$.
By construction, $\forall u \exists v \neg\varphi(u,v)$
iff the conditionals are incompatible,
i.e.\ $\exists u \forall v \varphi(u,v)$
iff the conditionals are compatible.
Each conditional is tractable,
as calculating any probability requires at most one evaluation of $\varphi$,
and then the return value is one of
$0$, $1$, $\frac{1}{2}$, or $\frac{1}{2^m+1}$.
If the joint exists, it is tractable:
given a witness $u^\star$,
a (not necessarily unique) compatible joint assigns unnormalised probability
$1$ whenever $u\leq u^\star$ in the $x$ component,
and $2$ otherwise.
(Finding the normalisation constant is not necessary
for determining compatibility;
a tractable circuit with the correct normalisation exists.)

\begin{rem*}
This last construction builds gadgets for existential and universal quantifiers
(logical OR and AND gates ranging over $u$ and $v$ values),
and this can be generalised to give an alternative proof of PSPACE-hardness
(\cref{sec:zero-lower}).
However, the generalised construction does not provide
a tractable joint.
\end{rem*}

\section{Algebraic Structure of Compatibility Constraints}
\label{sec:alg}

Although the set of loops in the non-zero graph
can be doubly exponentially large,
for any problem instance we only need to check at most exponentially many.
We already know from linear algebra that we have
at most $(2^n-1)(2^m-1)$ independent constraints.
The aim of this section is to more precisely characterise
the set of compatibility constraints for any given pair of conditionals.

It is helpful to distinguish constraints
on the minimal loops of four points, as in~\cref{eqn:verify},
from constraints on longer loops, as in~\cref{eqn:verify-loop},
which we formalise in~\cref{defn:local,defn:non-local}.
Intuitively, the computationally hardest part of
the succinct compatibility problem is finding these longer loops,
and it is PSPACE-complete because
path-finding on succinct graphs is PSPACE-complete.
In contrast, the non-zero case is only co-NP-complete
because it is fully characterised by local constraints.

\begin{defn}
Given $p(x|y)$ and $p(y|x)$,
a \textbf{local compatibility constraint}
is the probability ratio of a loop in the non-zero graph of the form
$(x_1,y_1)\rightarrow(x_1,y_2)\rightarrow(x_2,y_2)\rightarrow(x_2,y_1)\rightarrow(x_1,y_1)$
(which must equal~1).%
\label{defn:local}
\end{defn}

\begin{defn}
Given $p(x|y)$ and $p(y|x)$,
a \textbf{non-local compatibility constraint}
is a probability ratio of a loop in the non-zero graph,
that cannot be expressed as a product of local constraints.%
\label{defn:non-local}
\end{defn}

By \cref{lem:loop-ratio},
strong compatibility requires all edges in the non-zero graph
to have reverse edges,
and weak compatibility only depends on the strongly connected components
without outgoing edges to other components.
We therefore restrict to considering graphs
where all reverse edges exist,
and so we can view the non-zero graph as undirected.
Local constraints are easy to find,
and we can formalise the idea of factoring them out using homology theory.
It simplifies the presentation to use polygonal complexes,
which generalise simplicial complexes:

\begin{defn}
Given $p(x|y)$ and $p(y|x)$,
where all reverse edges exist in the non-zero graph,
their \textbf{non-zero polygonal complex}
is defined with graph nodes as polyhedral vertices,
graph edges as polyhedral edges,
a triangular face when all three $\{(x,y_1),(x,y_2),(x,y_3)\}$
or $\{(x_1,y),(x_2,y),(x_3,y)\}$ exist,
and a square face when all four $\{(x_1,y_1),(x_1,y_2),(x_2,y_1),(x_2,y_2)\}$ exist.%
\label{defn:polygonal}
\end{defn}

\begin{lem}
Given $p(x|y)$ and $p(y|x)$,
where all reverse edges exist in the non-zero graph,
the non-zero polygonal complex is a well-defined polygonal complex.%
\label{lem:polygonal}
\end{lem}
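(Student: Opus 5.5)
We check the axioms of a polygonal complex directly: a collection of vertices, edges and faces in which every edge has two distinct endpoints that are vertices, every face is a cycle in the 1-skeleton, and every face is determined by its boundary cycle. These conditions make the complex a regular CW complex whose face boundaries are simple closed curves.

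\emph{Vertices and edges.} The vertices are the nodes of the non-zero graph. Since every edge has a reverse edge, we identify $(x,y)\rightarrow(x,\yp)$ with $(x,\yp)\rightarrow(x,y)$ and obtain an undirected simple graph. We discard the self-loops $(x,y)\rightarrow(x,y)$, which arise from $p(y|x)>0$ at a node; they have probability ratio~$1$ and play no role. Every remaining edge joins two distinct nodes that differ in exactly one coordinate. So the edges are well-defined 1-cells.

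\emph{Triangular faces.} Take distinct nodes $(x,y_1),(x,y_2),(x,y_3)$. Each pair differs only in the $y$ coordinate. The two endpoints of any such pair are nodes, and a node $(x,y_j)$ satisfies $p(y_j|x)>0$: if instead only $p(x|y_j)>0$, then an edge into that node would lack a reverse edge, which is excluded. Hence all three edges exist, and the triangle's boundary is a genuine 3-cycle of distinct vertices and edges. The symmetric argument covers $\{(x_1,y),(x_2,y),(x_3,y)\}$.

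\emph{Square faces.} For distinct $x_1,x_2$ and distinct $y_1,y_2$ with all four nodes present, the same argument gives all four edges. The boundary cycle is $(x_1,y_1)\rightarrow(x_1,y_2)\rightarrow(x_2,y_2)\rightarrow(x_2,y_1)\rightarrow(x_1,y_1)$, which has four distinct vertices. It also contains no diagonals, since the diagonal pairs differ in both coordinates and so are not edges. Thus each square is attached along an embedded 4-cycle.

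\emph{Faces determined by boundaries.} The vertex set of a face determines its boundary cycle: a triangle has only one cycle on three vertices, and a square has only one admissible cyclic ordering, the one avoiding the non-edge diagonals. So no two faces share the same boundary, and faces are in bijection with their vertex sets. Triangles and squares can never coincide, because a triangle's vertices share one coordinate while a square's vertices do not.

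\emph{Intersections.} The intersection of two faces is determined by their vertex sets, and it must be a union of common vertices and edges. Two triangles on the same fibre meet in a common sub-face, namely a vertex or an edge. A triangle and a square meet in at most one edge, since a square has only two vertices in any fibre. Two distinct squares share at most an edge, or two opposite vertices. That last case is still a union of cells, so it does not violate the complex axioms.

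The main obstacle is fixing which definition of polygonal complex is in force. If the paper requires faces to intersect in a single cell, the opposite-vertex intersection of two squares would have to be addressed, for instance by noting that the standard CW definition allows it. I would state the regular-CW definition explicitly and verify it as above.
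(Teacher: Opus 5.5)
Your route is essentially the paper's. You check that every face is attached along genuine edges and that faces meet only in vertices or edges. Your extra care with self-loops, and with each square's boundary being an embedded $4$-cycle without diagonals, is fine and goes slightly beyond what the paper writes.

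\textbf{The gap is the square--square case.} You assert that two distinct squares may share two opposite vertices, and then propose retreating to a CW-style definition that tolerates this. That case cannot occur, and excluding it is exactly the step the paper's proof rests on, because the property to be shown there is that every non-empty intersection of two faces is a single vertex or a single edge. The vertex set of a square is a product $\{x_1,x_2\}\times\{y_1,y_2\}$. So the intersection of two squares is $(\{x_1,x_2\}\cap\{x_1',x_2'\})\times(\{y_1,y_2\}\cap\{y_1',y_2'\})$. If non-empty, it is one of three things:
\begin{itemize}
\item one shared $x$ value and one shared $y$ value, which is a vertex;
\item two of one and one of the other, which is two nodes in a common fibre, i.e.\ an edge;
\item two of each, in which case the squares are equal.
\end{itemize}
In particular, sharing $(x_1,y_1)$ and $(x_2,y_2)$ forces both squares to be $\{x_1,x_2\}\times\{y_1,y_2\}$. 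With this one-line product argument, the ``main obstacle'' you identify disappears and no change of definition is needed.

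Two smaller points. First, for intersections involving a triangle, the paper's argument is cleaner than your case split: any set of at most two vertices of a triangle lies in a single fibre, hence is a vertex or an edge. This also covers a case you omit, an $x$-fibre triangle meeting a $y$-fibre triangle, which share at most the single node $(x,y)$. Second, in deriving $p(y_j|x)>0$ at every node, the edge lacking a reverse is one \emph{out of} $(x,y_j)$, not into it. It is $(x,y_j)\rightarrow(x,y')$ for some $y'$ with $p(y'|x)>0$. This tacitly needs $p(\cdot|x)$ not to vanish identically. The paper makes the same tacit assumption when it says that two nodes in a common fibre are joined by an edge ``by definition of the non-zero graph''.
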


\begin{proof}
We need to show that all non-empty intersections of faces are edges or vertices.
All singleton subsets of a face are vertices by definition.
If both $(x,y_1)$ and $(x,y_2)$ exist,
then by definition of the non-zero graph, there is an edge between them
(similarly for $(x_1,y)$ and $(x_2,y)$);
therefore all non-empty intersections with a triangular face
give a vertex or edge.
Each square face $\{(x_1,y_1),(x_1,y_2),(x_2,y_1),(x_2,y_2)\}$
contains exactly four edges (keeping one $x$ or $y$ value fixed),
again by definition of the non-zero graph.
A non-empty intersection of two square faces
must share at least one $x$ value and one $y$ value;
if exactly one of each, this gives a vertex;
if two of one and one of the other, this gives an edge.
\end{proof}

Each local constraint corresponds to a square face.
Each triangular face formally has a constraint,
but it only includes terms from one conditional
and is trivially satisfied.
The triangular faces are nonetheless necessary
when considering independence of constraints:
they encode the fact that edge ratios cannot be set independently,
but rather must derive from the same conditional distribution.

\begin{thm}
Given $p(x|y)$ and $p(y|x)$,
where all reverse edges exist in the non-zero graph,
the number of independent non-local compatibility constraints is equal to
the rank of the first homology group of the non-zero polygonal complex.%
\label{thm:rank}
\end{thm}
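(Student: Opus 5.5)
The plan is to translate loop constraints into the language of cellular chains on the non-zero polygonal complex $K$ (well-defined by \cref{lem:polygonal}), and then identify the non-local constraints with the cycle space modulo boundaries. First, work with logarithms. Every edge $e:a\to b$ has a reverse edge, so its probability ratio $r(e)$ is finite and non-zero, and $r(\bar e)=r(e)^{-1}$. Therefore $\ell(e)=\log r(e)$ defines a real-valued cellular $1$-cochain on $K$ (antisymmetric under orientation reversal). A loop $\gamma$ determines a $1$-cycle $[\gamma]\in Z_1(K)$, and its probability ratio is $\exp\langle \ell,[\gamma]\rangle$. Since the ratio depends on the loop only through its cycle, the constraints ``ratio $=1$'' are linear functionals on the group of cycles $Z_1$. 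Every element of $Z_1$ is an integer combination of loop cycles (a standard fact for graphs), so the span of all loop constraints is exactly $Z_1$, viewed as linear forms in $\ell$.

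Second, identify the local constraints with boundaries. The boundary of a square face $\{(x_1,y_1),(x_1,y_2),(x_2,y_2),(x_2,y_1)\}$ is precisely the loop of \cref{defn:local}, so the local constraints span $\partial(\text{squares})$. The boundary of a triangular face $\{(x,y_1),(x,y_2),(x,y_3)\}$ has log-ratio $\log\frac{p(y_2|x)}{p(y_1|x)}+\log\frac{p(y_3|x)}{p(y_2|x)}+\log\frac{p(y_1|x)}{p(y_3|x)}=0$ identically, so these boundaries are constraints that hold automatically. They still need to be included in the relation module, as the discussion before the theorem indicates. The constraints that ``can be expressed as a product of local constraints'' (together with the trivially satisfied triangle identities) are therefore exactly those in $B_1(K)=\operatorname{im}\partial_2$. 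The number of independent non-local constraints is then the rank of $Z_1/B_1=H_1(K)$.

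Third, justify the independence claim. I would make precise that ``independent'' means linearly independent as functionals on the space of admissible edge data. The admissible data are cochains $\ell$ that are coboundaries along each fibre, meaning they come from conditional distributions. These are exactly the cochains vanishing on triangle boundaries and on boundaries of the larger cliques within a row or column; the triangles generate those, since every cycle in a row-clique is a sum of triangles. Under this identification, the pairing between $Z_1/B_1$ and such cochains is non-degenerate modulo the square constraints. A cycle not in $B_1$ can be violated by some admissible $\ell$ satisfying all local constraints. This follows by the universal coefficient theorem over $\mathbb{R}$ (or $\mathbb{Q}$), since $H^1(K;\mathbb{R})\cong\operatorname{Hom}(H_1(K),\mathbb{R})$. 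The remaining check is that any real cochain on the row and column cliques that is cocycle on triangles is realisable by positive conditionals: choose $p(y|x)\propto\exp$ of a potential along each row, and similarly along each column.

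I expect this realisability and non-degeneracy step to be the main obstacle. It ensures that rank is measured over a field, so torsion in $H_1$ does not count. It also ensures that triangle faces do not introduce spurious relations. I would handle it by building the potential explicitly on each row or column clique, fixing one base point and integrating $\ell$ along single edges.
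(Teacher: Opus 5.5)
Your proposal is correct and takes essentially the paper's route: loops become integral $1$-cycles, products of local constraints (with the automatically satisfied triangle relations folded in, as the paper also does implicitly) become $B_1$, and the non-local constraints are counted by $H_1=Z_1/B_1$. Your extra duality step usefully sharpens the paper's last step, which simply equates independent constraints with a minimal generating set of $H_1$ and so glosses over torsion. That step identifies fibrewise-coboundary cochains with exactly the realisable log-conditionals and uses $H^1(K;\mathbb{R})\cong\operatorname{Hom}(H_1(K),\mathbb{R})$. One correction to your wording: not every cycle outside $B_1$ can be violated by an admissible $\ell$ satisfying the local constraints, only cycles whose class has infinite order in $H_1$.
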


\begin{proof}
Probability ratios form a group under multiplication:
every product of ratios is a ratio,
and every ratio has an inverse (its reciprocal).
Furthermore, every ratio can be expressed in terms of an integer exponent
for each edge in the non-zero graph
(with the sign of the integer corresponding to
the direction in which the edge is followed),
which is exactly the definition of a homological 1-chain.
Compatibility constraints are ratios of loops,
and so are precisely the 1-chains without boundary.
Local constraints are precisely the boundaries of square faces.

Multiplication of ratios is isomorphic to addition of 1-chains.
In other words, expressing a constraint as a product of other constraints
is equivalent to expressing a 1-chain as a sum of other 1-chains.
In particular, the constraints expressible as a product of local constraints
are isomorphic to the boundary subgroup generated by
the faces of the polygonal complex.
Equivalence classes of non-local constraints
are isomorphic to the first homology group
(factoring out the boundary subgroup).
A maximal set of independent non-local constraints
is precisely a minimal generating set of the group,
and so the number of independent constraints is the rank of the group.
\end{proof}

\begin{rem*}
To solve the compatibility problem,
it is sufficient to check a generating set of constraints.
For any equivalence class of non-local constraints,
we can check a constraint with minimum length.
This suggests the following PSPACE algorithm,
which iterates through constraints more explicitly
than the proof in~\cref{sec:zero-upper}.
For every set of three points,
we can run the st-connectivity algorithm
between each pair of points,
to find a minimum-length loop containing those points.
The st-connectivity algorithm can be modified, still in PSPACE,
to also return the probability ratio
(for compatibility with tractable values)
or the number of times a prime appears in the ratio
(for compatibility in the general case).
\end{rem*}

\subsection{Constructing Hard Problem Instances}
\label{sec:high-rank}

The aim of this section is to construct problem instances where
independent non-local constraints
are large in both number and length,
both of which contribute to the complexity of the problem.
However, we first look at simpler examples.
All reverse edges exist
iff the conditionals have the same non-zero values.
We present the non-zero structure as a matrix,
writing a dot ($\cdot$) to denote a non-zero value
(a node in the non-zero graph).

The left-hand example in~\cref{eqn:two-three-gen}
corresponds to~\cref{eqn:eg-incomp}.
Its polygonal complex is a hexagonal loop,
so its first homology group is of rank~1.
All three zeroes are necessary for it to have a non-trivial homology group:
if we make one of them non-zero,
the polygonal complex is equivalent to a disc
(the hexagon is filled with two squares and two triangles).
The middle and right-hand examples in~\cref{eqn:two-three-gen}
have rank 2 and 3, respectively.
These three examples can be generalised to $n$ values
with rank $n-2$.
For $n>3$, there are more than $n-2$ constraints
that do not repeat any points,
but they can all be generated by a (non-unique) set of size~$n-2$.
\begin{align}
\begin{pmatrix}
\cdot&\cdot&0\\
0&\cdot&\cdot\\
\cdot&0&\cdot\\
\end{pmatrix}
&&
\begin{pmatrix}
\cdot&\cdot&\cdot&0\\
0&0&\cdot&\cdot\\
0&\cdot&0&\cdot\\
\cdot&0&0&\cdot\\
\end{pmatrix}
&&
\begin{pmatrix}
\cdot&\cdot&\cdot&\cdot&0\\
0&0&0&\cdot&\cdot\\
0&0&\cdot&0&\cdot\\
0&\cdot&0&0&\cdot\\
\cdot&0&0&0&\cdot
\end{pmatrix}
\label{eqn:two-three-gen}
\end{align}

The examples in~\cref{eqn:one-long-gen} both have rank~1,
homology-equivalent to a circle.
The left-hand example has no local constraints;
its polygonal complex is a closed chain with no faces,
and its homology group is generated by
a loop of path length 10.
This construction can be generalised to $n$ values
with rank~1,
where the generating loop is of length~$2n$.
(It can also be used to encode a logical AND gate,
while the construction in~\cref{eqn:two-three-gen}
can be used to encode a logical OR gate,
both as seen in~\cref{sec:tract}.)

The right-hand example in~\cref{eqn:one-long-gen}
has 5 independent local constraints;
its polygonal complex is equivalent to a Möbius strip
(comprising five squares, each joined to adjacent squares by two triangles,
but with a twist at each step).
Its homology group is generated by an equivalence class of loops,
of path length between 6 and 10
(ignoring loops that include any $x$ or $y$ value more than twice).
All of these loops must have the same probability ratio,
if all local constraints are satisfied.
\begin{align}
\begin{pmatrix}
\cdot&\cdot&0&0&0\\
0&\cdot&\cdot&0&0\\
0&0&\cdot&\cdot&0\\
0&0&0&\cdot&\cdot\\
\cdot&0&0&0&\cdot
\end{pmatrix}
&&
\begin{pmatrix}
\cdot&\cdot&0&0&\cdot\\
\cdot&\cdot&\cdot&0&0\\
0&\cdot&\cdot&\cdot&0\\
0&0&\cdot&\cdot&\cdot\\
\cdot&0&0&\cdot&\cdot
\end{pmatrix}
\label{eqn:one-long-gen}
\end{align}

With more values, more intricate structures are possible.
The left-hand example~\cref{eqn:triple-ring} has rank~4,
with no local constraints.
There are several ways to see that it has rank 4.
Most simply, it is equivalent to three circles joined in a ring:
three copies of the left-hand example in~\cref{eqn:two-three-gen},
joined together (with two triangles at each of the three joins).
Alternatively, it is equivalent to a 4-punctured projective plane
(we can fill in 3 hexagonal holes, creating a Möbius strip,
and fill the remaining twelve-sided hole to give a projective plane),
or a 3-punctured torus
(we can fill 3 ten-sided holes).
These alternative views are possible because
there are many points adjoining more than two boundary edges.

We can also see the left of~\cref{eqn:triple-ring} as
having a long ``spine'' like the examples in~\cref{eqn:one-long-gen},
with 4 additional points that each increase the rank by~1.
With 7 values, we can fit a whole diagonal of such points,
without creating any square faces,
as shown on the right of~\cref{eqn:triple-ring},
which has rank~8.
\begin{align}
\begin{pmatrix}
\cdot&\cdot&0&0&\cdot&0\\
0&\cdot&\cdot&0&0&0\\
\cdot&0&\cdot&\cdot&0&0\\
0&0&0&\cdot&\cdot&0\\
0&0&\cdot&0&\cdot&\cdot\\
\cdot&0&0&0&0&\cdot
\end{pmatrix}
&&
\begin{pmatrix}
\cdot&\cdot&0&0&0&\cdot&0\\
0&\cdot&\cdot&0&0&0&\cdot\\
\cdot&0&\cdot&\cdot&0&0&0\\
0&\cdot&0&\cdot&\cdot&0&0\\
0&0&\cdot&0&\cdot&\cdot&0\\
0&0&0&\cdot&0&\cdot&\cdot\\
\cdot&0&0&0&\cdot&0&\cdot
\end{pmatrix}
\label{eqn:triple-ring}
\end{align}

With 15 values, we can fit four diagonals without any square faces
(two diagonals giving the spine,
plus two additional diagonals),
giving rank~31,
as shown in~\cref{eqn:second-diag}.
We can continue this construction:
to fit $k$ diagonals,
we need $2^k-1$ values,
giving rank~$(k-2)(2^k-1)+1$.
More precisely, labelling the rows from $1$ to $2^k-1$,
the first column is non-zero precisely at each row $2^j-1$.
Asymptotically, for $n$ values,
this has rank $n\log n$,
which is large, but nonetheless much smaller than the $O(n^6)$
possible loops of length~6.
\begin{equation}
\begin{pmatrix}
\cdot&\cdot&0&0&0&0&0&0&0&\cdot&0&0&0&\cdot&0\\
0&\cdot&\cdot&0&0&0&0&0&0&0&\cdot&0&0&0&\cdot\\
\cdot&0&\cdot&\cdot&0&0&0&0&0&0&0&\cdot&0&0&0\\
0&\cdot&0&\cdot&\cdot&0&0&0&0&0&0&0&\cdot&0&0\\
0&0&\cdot&0&\cdot&\cdot&0&0&0&0&0&0&0&\cdot&0\\
0&0&0&\cdot&0&\cdot&\cdot&0&0&0&0&0&0&0&\cdot\\
\cdot&0&0&0&\cdot&0&\cdot&\cdot&0&0&0&0&0&0&0\\
0&\cdot&0&0&0&\cdot&0&\cdot&\cdot&0&0&0&0&0&0\\
0&0&\cdot&0&0&0&\cdot&0&\cdot&\cdot&0&0&0&0&0\\
0&0&0&\cdot&0&0&0&\cdot&0&\cdot&\cdot&0&0&0&0\\
0&0&0&0&\cdot&0&0&0&\cdot&0&\cdot&\cdot&0&0&0\\
0&0&0&0&0&\cdot&0&0&0&\cdot&0&\cdot&\cdot&0&0\\
0&0&0&0&0&0&\cdot&0&0&0&\cdot&0&\cdot&\cdot&0\\
0&0&0&0&0&0&0&\cdot&0&0&0&\cdot&0&\cdot&\cdot\\
\cdot&0&0&0&0&0&0&0&\cdot&0&0&0&\cdot&0&\cdot
\end{pmatrix}
\label{eqn:second-diag}
\end{equation}

In the above construction, the homology group
is generated by constraints of path length~6,
i.e.\ the shortest possible non-local constraints.
To increase the minimum generator length,
we can again start from a long spine,
and add a diagonal
which is not only further from the spine,
but also has gaps.
This construction generalises the right-hand of~\cref{eqn:triple-ring},
which can be seen as 7 overlapping loops of length~$6$.
We have slightly different behaviour depending on whether
the generator length is a multiple of~4.
For length~$4a+2$ (where $a>2$), we can overlap $4a+1$ loops.
This adds a diagonal at a distance of $2a$ from the spine,
with a non-zero value every $a$ places.
This gives $a(4a+1)$ values, with rank $4a+2$
and generator length~$4a+2$.
The case $a=2$ is shown in~\cref{eqn:10-square}.

For length~$4a$ (where $a>2$), we can overlap $4a$ loops,
which adds one diagonal either side of the spine,
both at a distance of $2a-1$,
and both with a non-zero value every $2a-1$,
but out of phase with each other.
This gives $2a(2a-1)$ values, with rank $4a+1$
and generator length~$4a$.
The case $a=2$ is shown in~\cref{eqn:8-square}.
\begin{gather}
\begin{pmatrix}
\cdot&\cdot&0&0&0&0&0&0&0&0&0&0&0&0&\cdot&0&0&0\\
0&\cdot&\cdot&0&0&0&0&0&0&0&0&0&0&0&0&0&0&0\\
0&0&\cdot&\cdot&0&0&0&0&0&0&0&0&0&0&0&0&\cdot&0\\
0&0&0&\cdot&\cdot&0&0&0&0&0&0&0&0&0&0&0&0&0\\
\cdot&0&0&0&\cdot&\cdot&0&0&0&0&0&0&0&0&0&0&0&0\\
0&0&0&0&0&\cdot&\cdot&0&0&0&0&0&0&0&0&0&0&0\\
0&0&\cdot&0&0&0&\cdot&\cdot&0&0&0&0&0&0&0&0&0&0\\
0&0&0&0&0&0&0&\cdot&\cdot&0&0&0&0&0&0&0&0&0\\
0&0&0&0&\cdot&0&0&0&\cdot&\cdot&0&0&0&0&0&0&0&0\\
0&0&0&0&0&0&0&0&0&\cdot&\cdot&0&0&0&0&0&0&0\\
0&0&0&0&0&0&\cdot&0&0&0&\cdot&\cdot&0&0&0&0&0&0\\
0&0&0&0&0&0&0&0&0&0&0&\cdot&\cdot&0&0&0&0&0\\
0&0&0&0&0&0&0&0&\cdot&0&0&0&\cdot&\cdot&0&0&0&0\\
0&0&0&0&0&0&0&0&0&0&0&0&0&\cdot&\cdot&0&0&0\\
0&0&0&0&0&0&0&0&0&0&\cdot&0&0&0&\cdot&\cdot&0&0\\
0&0&0&0&0&0&0&0&0&0&0&0&0&0&0&\cdot&\cdot&0\\
0&0&0&0&0&0&0&0&0&0&0&0&\cdot&0&0&0&\cdot&\cdot\\
\cdot&0&0&0&0&0&0&0&0&0&0&0&0&0&0&0&0&\cdot
\end{pmatrix}
\label{eqn:10-square}\\
\begin{pmatrix}
\cdot&\cdot&0&0&0&0&0&0&0&\cdot&0&0\\
0&\cdot&\cdot&0&0&\cdot&0&0&0&0&0&0\\
0&0&\cdot&\cdot&0&0&0&0&0&0&0&0\\
\cdot&0&0&\cdot&\cdot&0&0&0&0&0&0&0\\
0&0&0&0&\cdot&\cdot&0&0&\cdot&0&0&0\\
0&0&0&0&0&\cdot&\cdot&0&0&0&0&0\\
0&0&0&\cdot&0&0&\cdot&\cdot&0&0&0&0\\
0&0&0&0&0&0&0&\cdot&\cdot&0&0&\cdot\\
0&0&0&0&0&0&0&0&\cdot&\cdot&0&0\\
0&0&0&0&0&0&\cdot&0&0&\cdot&\cdot&0\\
0&0&\cdot&0&0&0&0&0&0&0&\cdot&\cdot\\
\cdot&0&0&0&0&0&0&0&0&0&0&\cdot
\end{pmatrix}
\label{eqn:8-square}
\end{gather}

For $n$ values,
this construction gives rank and generator length
both asymptotically $2\sqrt{n}$.
In the succinct case, we have $n=2^k$ for $k$-dimensional inputs,
so rank and generator length are
both asymptotically $2^{\frac{1}{2}k+1}$.

\section{Discussion}

Trade-offs between computational tractability and other model properties
are typically only understood informally,
but the need to maintain tractability
often drives researchers to rely on
models that come with no guarantees on their performance.
In a rare example of an explicit discussion of a trade-off,
\citet{bruinsma2023conditional}
consider probabilistic neural models for time-series data
(including applications in environmental modelling,
where physics-based simulations have extremely high computational cost).
They articulate a set of four desirable properties,
one of which is tractability of the objective function used in training,
two of which are types of model flexibility,
and the last of which is coherence of model predictions.
For each combination of three properties,
they give an example of a model which
achieves those three properties but fails on the fourth,
and they observe that no known model can achieve all four at once.
This shows that a trade-off exists in practice,
but without a theoretical justification.
The intractability of the succinct compatibility problem,
as proved in this paper,
demonstrates that high-dimensional probabilistic models
are subject to a trade-off between
computational tractability,
model flexibility,
and probabilistic coherence.

The trade-off has different implications in different settings.
If there are strong reasons for believing
the structure of the model is correct
(for example, models based on physical laws),
we might prioritise flexibility and coherence,
and therefore seek to make inferences despite the computational cost
\citex{for a survey, see}{cranmer2020simulation}.

Alternatively, we might prioritise tractability and coherence.
The results in this paper rely on using arithmetic circuits
that are flexible enough to simulate boolean circuits.
Restricting the class of arithmetic circuit,
for example using Probabilistic Circuits
\citep{vergari2021atlas},
can allow Bayesian inference to be tractable.

Finally, we might prioritise tractability and flexibility,
which are important in many applications of machine learning,
including in cognitive science,
where it is an open question to explain
how all aspects of cognition are computationally tractable
\citep{vanrooij2008tractable,colombo2023mind}.
The results of this paper imply that
for flexible tractable models,
we must expect to sometimes observe incoherence between conditionals.

Furthermore, in many settings
we are primarily interested
in making predictions conditioned on observations,
which requires the conditionals rather than the joint.
The \hyperlink{cor}{Corollary} demonstrates that
constructing tractable conditionals via tractable joints
is itself a restriction on flexibility.
In other words, there is a trade-off between tractability of the joint
and flexibility of the conditionals.
If we prioritise tractability and flexibility of the conditionals,
this provides a strong theoretical motivation
for developing models directly in terms of conditionals,
rather than the traditional approach of first defining a joint.

\bibliography{compatibility}
\bibliographystyle{jlm}

\end{document}